\def\E{\mathbb{E}}
\def\P{\mathbb{P}}
\def\1{\mathbf{1}}
\newtheorem{definition}{Definition}
\newtheorem{corollary}{Corollary}
\newtheorem{theorem}{Theorem}
\newtheorem{lemma}{Lemma}
\begin{document}
\title{Sparse Dueling Bandits}

\author{Kevin Jamieson, Sumeet Katariya\footnote{The first two authors are listed in alphabetical order as both contributed equally.}, Atul Deshpande and Robert Nowak \\
Department of Electrical and Computer Engineering\\ University of Wisconsin Madison }
\date{}
\maketitle

\begin{abstract} The dueling bandit problem is a variation of the classical multi-armed bandit in which the allowable actions are noisy comparisons between pairs of arms. This paper focuses on a new approach for finding the ``best'' arm according to the Borda criterion using noisy comparisons. We prove that in the absence of structural assumptions, the sample complexity of this problem is proportional to the sum of the inverse squared gaps between the Borda scores of each suboptimal arm and the best arm.  We explore this dependence further and consider structural constraints on the pairwise comparison matrix (a particular form of sparsity natural to this problem) that can significantly reduce the sample complexity.  This motivates a new algorithm called Successive Elimination with Comparison Sparsity (SECS) that exploits sparsity to find the Borda winner using fewer samples than standard algorithms. We also evaluate the new algorithm experimentally with synthetic and real data. The results show that the sparsity model and the new algorithm can provide significant improvements over standard approaches.  \end{abstract}

\section{INTRODUCTION}

The dueling bandit is a variation of the classic multi-armed bandit problem in which the actions are noisy comparisons between arms, rather than observations from the arms themselves \citep{karmedduelingbandit}.  Each action provides $1$ bit indicating which of two arms is probably better.  For example, the arms could represent objects and the bits could be responses from people asked to compare pairs of objects.  
In this paper, we focus on the pure {\em exploration} problem of finding the ``best'' arm from noisy pairwise comparisons. This problem is different from the {\em explore-exploit} problem studied in \citet{karmedduelingbandit}. There can be different notions of ``best'' in the dueling framework, including the Condorcet and Borda criteria (defined below).

Most of the dueling-bandit algorithms are primarily concerned with finding the Condorcet winner
(the arm that is probably as good or better than every other arm).
There are two drawbacks to this.  First, a Condorcet winner does not exist unless the underlying probability matrix governing the outcomes of pairwise comparisons satisfies certain restrictions. These restrictions may not be met in many situations. In fact, we show that a Condorcet winner doesn't exist in our experiment with real data presented below.  Second, the best known upper bounds on the sample complexity of finding the Condorcet winner (assuming it exists) grow quadratically (at least) with the number of arms.  This makes Condorcet algorithms impractical for large numbers of arms.

To address these drawbacks, we consider the Borda criterion instead.  The Borda
score of an arm is the probability that the arm is preferred to another arm
chosen uniformly at random. A Borda winner (arm with the largest Borda score)
always exists for every possible probability matrix. We assume throughout this paper that there exists a unique Borda winner. Finding the Borda winner with probability at least $1-\delta$ can be reduced to solving an instance of the standard multi-armed bandit problem resulting in a sufficient sample complexity of $\mathcal{O}\left(\sum_{i>1} (s_1-s_i)^{-2} \log\left( \log((s_1-s_i)^{-2})/\delta\right) \right)$, where $s_i$ denotes Borda score of arm $i$ and
$s_1>s_2>\cdots>s_n$ are the scores in descending order \citep{karnin2013almost,lilucb}. In favorable cases, for instance, if $s_1-s_i
\geq c$, a constant for all $i>1$, then this sample complexity is linear in $n$ as opposed to the quadratic sample complexity necessary to find the Condorcet winner. In this paper we show that this upper bound is essentially tight, thereby apparently ``closing'' the Borda winner identification problem. However, in this paper we consider a specific type of structure that is motivated by its existence in real datasets that complicates this apparently simple story. In particular, we show that the reduction to a standard multi-armed bandit problem can result in very bad performance when compared to an algorithm that exploits this observed structure.   

We explore the sample complexity dependence in more detail and consider
structural constraints on the matrix (a particular form of sparsity natural to
this problem) that can significantly reduce the sample complexity. The sparsity model
captures the commonly observed behavior in elections in which there are a small
set of ``top'' candidates that are competing to be the winner but only differ on
a small number of attributes, while a large set of ``others'' are mostly irrelevant
as far as predicting the winner is concerned in the sense that they would always
lose in a pairwise matchup against one of the ``top'' candidates.

This motivates a new algorithm called Successive Elimination with Comparison
Sparsity (SECS).  SECS takes advantage of this structure by determining which of
two arms is better on the basis of their performance with respect to a sparse
set of ``comparison'' arms. Experimental results with real data demonstrate the
practicality of the sparsity model and show that SECS can provide significant
improvements over standard approaches. 

The main contributions of this paper are as follows:\\[-14pt]
\begin{itemize}
    \item A distribution dependent lower bound for the sample complexity of identifying the Borda winner that essentially shows that the Borda reduction to the standard multi-armed bandit problem (explained in detail later) is essentially optimal up to logarithmic factors, given no prior structural information. 
    \item A new structural assumption for the $n$-armed
        dueling bandits problem in which the top arms can be distinguished by duels with a sparse set of other arms.
    \item An algorithm for the dueling bandits problem under this assumption, with
        theoretical performance guarantees showing significant sample complexity improvements compared to naive reductions to standard multi-armed bandit algorithms.
   \item Experimental results, based on real-world applications, demonstrating
        the superior performance of our algorithm compared to existing methods.
\end{itemize}

\section{PROBLEM SETUP}
\label{problemsetupsection}
The \textit{n-armed dueling bandits problem} \citep{karmedduelingbandit} is a modification of the \textit{n-armed bandit problem}, where instead of pulling a single arm, we choose a pair of arms $(i,j)$ to duel, and receive one bit indicating which of the two is better or preferred, with the probability of $i$ winning the duel is equal to a constant $p_{i,j}$ and that of $j$ equal to $p_{j,i}=1-p_{i,j}$. We define the \textit{probabilty matrix} $P=[p_{i,j}]$, whose $(i,j)$th entry is $p_{i,j}$.

Almost all existing $n$-armed dueling bandit methods \citep{karmedduelingbandit, beatthemean, rucb,urvoy2013generic,ailon2014reducing} focus on the explore-exploit problem and furthermore make a variety of assumptions on the preference matrix $P$. In
particular, those works assume the existence of a Condorcet winner: an arm, $c$, such that
$p_{c,j}>\frac{1}{2}$ for all $j \neq c$. The \textit{Borda} winner is an arm $b$
that satisfies $\sum_{j\neq b} p_{b,j} \geq \sum_{j\neq i} p_{i,j}$ for all $i=1,\cdots,n$. In
other words, the Borda winner is the arm with the highest average probability of
winning against other arms, or said another way, the arm that has the highest
probability of winning against an arm selected uniformly at random from the
remaining arms. The Condorcet winner has been given more attention than the
Borda, the reasons being: 1) Given a choice between the Borda and the Condorcet
winner, the latter is preferred in a direct comparison between the two. 2) As
pointed out in \citet{urvoy2013generic,rucb} the Borda winner can be found by reducing the dueling bandit problem to a standard multi-armed bandit problem as follows.

\begin{definition} \label{bordareduction}{\em Borda Reduction.} The action of pulling arm $i$ with reward $\frac{1}{n-1}\sum_{j\neq i} p_{i,j}$ can be simulated by dueling arm $i$ with another arm chosen uniformly at random.
\end{definition}
However, we feel that the Borda problem has received far less attention than it
deserves. Firstly, the Borda winner \textit{always exists}, the Condorcet does
not. For example, a Condorcet winner does not exist in the MSLR-WEB10k datasets considered in this paper. Assuming the existence of a Condorcet winner severely restricts the class
of allowed $P$ matrices: only those $P$ matrices are allowed which have a row
with all entries $\geq \frac{1}{2}$. In fact,
\citet{karmedduelingbandit,beatthemean} require that the comparison probabilities
$p_{i,j}$ satisfy additional transitivity conditions that are often violated in
practice. Secondly, there are many cases where the Borda winner and the
Condorcet winner are distinct, and the Borda winner would be preferred in many
cases. Lets assume that arm $c$ is the Condorcet winner, with
$p_{c,i}=0.51$ for $i\neq c$. Let arm $b$ be the Borda winner with $p_{b,i}=1$ for
$i\neq b,c$, and $p_{b,c}=0.49$. It is reasonable that arm $c$ is only marginally
better than the other arms, while arm $b$ is significantly preferred over all
other arms except against arm $c$ where it is marginally rejected. In this
example - chosen extreme to highlight the pervasiveness of situations where the
Borda arm is preferred - it is clear that arm $b$ should be the winner: think of
the arms representing objects being contested such as t-shirt designs, and the
$P$ matrix is generated by showing users a pair of items and asking them to
choose the better among the two. This example also shows that the Borda winner
is more robust to estimation errors in the $P$ matrix (for instance, when the
$P$ matrix is estimated by asking a small sample of the entire population to vote
among pairwise choices). The Condorcet winner is sensitive to entries in
the Condorcet arm's row that are close to $\frac{1}{2}$, which is not the case for
the Borda winner. Finally, there are important cases (explained
next) where the winner can be found in fewer number of duels than would be
required by Borda reduction.

\section{MOTIVATION} \label{motivationSection}

\begin{table*}
\begin{equation}
    P_1 = 
    \begin{blockarray}{cccccccc}
        & 1 & 2 & 3 & \cdots & n & s_i & s_1-s_i\\ 
        \begin{block}{c(ccccc)cc} 1 & \frac{1}{2} & \frac{1}{2}&
            \frac{3}{4} & \cdots & \frac{3}{4}+\epsilon &
            \frac{\frac{1}{2}+\epsilon}{n-1}+\frac{3}{4}\frac{n-2}{n-1} & 0 \\ \\ 2
            & \frac{1}{2}& \frac{1}{2} & \frac{3}{4} & \cdots &
            \frac{3}{4} &
            \frac{\frac{1}{2}}{n-1}+\frac{3}{4}\frac{n-2}{n-1} & \frac{\epsilon}{n-1} \\ \\ 3 &
            \frac{1}{4} & \frac{1}{4} & \frac{1}{2} & \cdots & \frac{1}{2}  &
            \frac{1}{2}\frac{n-2}{n-1} &  \frac{\frac{1}{2}+\epsilon}{n-1}+\frac{1}{4}\frac{n-2}{n-1}  \\ \\ \vdots & \vdots & \vdots & \vdots &
            \ddots & \vdots & \vdots & \vdots \\ \\ n & \frac{1}{4}-\epsilon & \frac{1}{4} &
            \frac{1}{2} & \cdots & \frac{1}{2} &
            -\frac{\epsilon}{n-1}+\frac{1}{2}\frac{n-2}{n-1} &
            \frac{\frac{1}{2}+2\epsilon}{n-1}+\frac{1}{4}\frac{n-2}{n-1} \\
        \end{block}
    \end{blockarray}
    \label{badP1matrix}
\end{equation}
\begin{equation}
    P_2 = 
    \begin{blockarray}{cccccccc}
        & 1 & 2 & 3 & \cdots & n & s_i & s_1-s_i\\ 
        \begin{block}{c(ccccc)cc} 1 & \frac{1}{2} &
            \frac{1}{2}+\frac{\epsilon}{n-1} & \frac{3}{4}+\frac{\epsilon}{n-1}
            & \cdots & \frac{3}{4}+\frac{\epsilon}{n-1} &
            \frac{\frac{1}{2}+\epsilon}{n-1}+\frac{3}{4}\frac{n-2}{n-1} & 0 \\
            \\ 
            2 & \frac{1}{2}-\frac{\epsilon}{n-1} & \frac{1}{2} & \frac{3}{4} &
            \cdots & \frac{3}{4} &
            \frac{\frac{1}{2}-\frac{\epsilon}{n-1}}{n-1}+\frac{3}{4}\frac{n-2}{n-1} &
            \frac{\epsilon}{n-1} + \frac{\epsilon}{(n-1)^2} \\ \\
            3 & \frac{1}{4}-\frac{\epsilon}{n-1} & \frac{1}{4} & \frac{1}{2} &
            \cdots & \frac{1}{2} & \frac{-\frac{\epsilon}{n-1}}{n-1}+
            \frac{1}{2}\frac{n-2}{n-1} &
            \frac{\frac{1}{2}+\epsilon+\frac{\epsilon}{n-1}}{n-1}+
            \frac{1}{4}\frac{n-2}{n-1}  \\ \\ 
            \vdots & \vdots & \vdots & \vdots & \ddots & \vdots & \vdots &
            \vdots \\ \\ n & \frac{1}{4} -\frac{\epsilon}{n-1} & \frac{1}{4} &
            \frac{1}{2} & \cdots & \frac{1}{2} &
            \frac{-\frac{\epsilon}{n-1}}{n-1} + \frac{1}{2}\frac{n-2}{n-1} &
            \frac{\frac{1}{2}+\epsilon+\frac{\epsilon}{n-1}}{n-1}+
            \frac{1}{4}\frac{n-2}{n-1}  \\
        \end{block}
    \end{blockarray}
    \label{badP2matrix}
\end{equation}
\end{table*}

We define the \textit{Borda score} of an arm $i$ to be the probability of the
$i^{\text{th}}$ arm winning a duel with another arm chosen uniformly at random:
\begin{equation*} s_i = \tfrac{1}{n-1}\sum\limits_{j\neq i}p_{i,j} \, .
\end{equation*} Without loss of generality, we assume that $s_1 > s_2 \geq \dots
\geq s_n$ but that this ordering is unknown to the algorithm. As mentioned
above, if the Borda reduction is used then the dueling bandit problem becomes a
regular multi-armed bandit problem and lower bounds for the multi-armed bandit problem
\citep{kaufmann2014complexity,mannortsitsiklis} suggest that the number of samples
required should scale like $\Omega \left( \sum_{i\neq 1}\frac{1}{(s_1-s_i)^2}
\log \frac{1}{\delta} \right)$, which depends only on the Borda scores, and not
the individual entries of the preference matrix. This would imply that  any preference
matrix $P$ with Borda scores $s_i$ is just as hard as another matrix $P'$ with
Borda scores $s_i'$ as long as $(s_1-s_i) = (s_1'-s_i')$. Of course, this lower bound only applies to algorithms using the Borda reduction, and not any algorithm for identifying the Borda winner that may, for instance, collect the duels in a more deliberate way. Next we consider  specific $P$ matrices that exhibit two very different kinds of structure but have the same differences in Borda scores which motivates the structure considered in this paper.  

\subsection{Preference Matrix $P$ known up to permutation of indices}
Shown below in equations (\ref{badP1matrix}) and (\ref{badP2matrix}) are two preference matrices $P_1$ and $P_2$ indexed by the number of arms $n$ that essentially have
the same Borda gaps -- $(s_1-s_i)$ is either like $\frac{\epsilon}{n}$ or
approximately $1/4$ -- but we will argue that $P_1$ is much ``easier'' than $P_2$ in a
certain sense (assume $\epsilon$ is an unknown constant, like $\epsilon =1/5$). Specifically,
if given $P_1$ and $P_2$ up to a permutation of the labels of their indices (i.e. given $\Lambda P_1 \Lambda^T$ for some unknown permutation matrix $\Lambda$), how many comparisons does it take to find the Borda winner in each case for different values of $n$?

Recall from above that if we ignore the fact that we know the matrices up to a
permutation and use the Borda reduction technique, we can use a 
multi-armed bandit algorithm (e.g. \citet{karnin2013almost,lilucb}) and find the best arm for both
$P_1$ and $P_2$ using  $O \left( {n^2} \log(\log(n)) \right)$ samples. We next
argue that given $P_1$ and $P_2$ up to a permutation, there exists an algorithm
that can identify the Borda winner of $P_1$ with just $O(n \log(n))$ samples
while the identification of the Borda winner for $P_2$ requires at least
$\Omega(n^2)$ samples. This shows that given the probability matrices up to a permutation, the sample complexity of identifying the Borda winner does not rely just on the Borda differences, but on the particular structure of the probability matrix. 

Consider $P_1$. We claim that there exists a procedure that exploits the
structure of the matrix to find the best arm of $P_1$ using just $O(n
\log(n))$ samples. Here's how: For each arm, duel it with $32\log
\frac{n}{\delta}$ other arms chosen uniformly at random. By Hoeffding's
inequality, with probability at least $1-\delta$ our empirical estimate of the
Borda score will be within $1/8$ of its true value for all $n$ arms and we can
remove the bottom $(n-2)$ arms due to the fact that their Borda gaps exceed $1/4$.
Having reduced the possible winners to just two arms, we can identify which rows
in the matrix they correspond to and duel each of these two arms against all of the
remaining $(n-2)$ arms $O(\frac{1}{\epsilon^2})$ times to find out which one
has the larger Borda score using just $O\left(
\frac{2(n-2)}{\epsilon^2} \right)$ samples, giving an overall sample complexity
of $O\left( n \log n \right)$. We have improved the sample complexity from $O(n^2 \log(\log(n)))$ using the Borda
reduction to just $O(n \log(n))$.  

Consider $P_2$. We claim that given this matrix up to a permutation of its indices, no algorithm can determine the winner of $P_2$ without
requesting $\Omega(n^2)$ samples.  To see this, suppose an oracle has made the
problem easier by reducing the problem down to just the top two rows of the
$P_2$ matrix. This is a binary hypothesis test for which Fano's inequality implies that to guarantee that the probability of error is not above some constant level, the number of samples to identify the Borda winner must scale like  $\min_{j \in [n] \setminus \{1,2\}} \frac{1}{KL(p_{1,j},p_{2,j})} \geq \min_{j \in [n] \setminus \{1,2\}} \frac{c}{(p_{1,j}-p_{2,j})^2} = \Omega(  (n/\epsilon)^2 )$ where the inequality holds for some $c$ by Lemma~\ref{klbernoulliupperbound} in the Appendix.

We just argued that the
structure of the $P$ matrix, and not just the Borda gaps, can dramatically
influence the sample complexity of finding the Borda winner. This leads us to
ask the question: if we don't know anything about the $P$ matrix beforehand (i.e. do not know the matrix up to a permutation of its indices), can
we learn and exploit this kind of structural information in an online fashion
and improve over the Borda reduction scheme? The answer is no, as we
argue next.

\subsection{Distribution-Dependent Lower Bound}
\label{lowerboundsection}
We prove a distribution-dependent lower bound on the complexity of finding the
best Borda arm for a general $P$ matrix. This is a result important in its own
right as it shows that the lower bound obtained for an algorithm using the Borda reduction is tight, that is, this result implies that barring any structural assumptions, the Borda reduction is optimal. 
\begin{definition}
{\em $\delta$-PAC dueling bandits algorithm}: A $\delta$-PAC dueling
bandits algorithm is an algorithm that selects duels between arms and based on the outcomes finds the Borda winner with probability
greater than or equal to $1-\delta$. 
\end{definition}
The techniques used to prove the following result are inspired from Lemma 1 in
\citet{kaufmann2014complexity} and Theorem 1 in \citet{mannortsitsiklis}. 

\begin{theorem}
    (Distribution-Dependent Lower Bound)
    Consider a matrix $P$ such that $\frac{3}{8} \leq p_{i,j} \leq \frac{5}{8},
    \forall i,j \in [n]$ with $n \geq 4$. Let $\tau$ be the total number of duels. Then for
    $\delta \leq 0.15$, any $\delta$-PAC dueling bandits algorithm to find the
    Borda winner has $$\E_P[\tau] \geq C \log
    \frac{1}{2\delta}
    \sum\limits_{i \neq 1}^{} \frac{1}{(s_1-s_i)^2} $$ where $s_i = \frac{1}{n-1}\sum_{j \neq
    i} p_{i,j}$ denotes the Borda score of arm $i$. Furthermore, $C$ can be chosen to be
    $1/90$.\\
\end{theorem}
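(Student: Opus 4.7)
The plan is to adapt the classical change-of-measure lower bound of \citet{kaufmann2014complexity,mannortsitsiklis} to the dueling setting, regarding each unordered pair $\{i,j\}$ as an ``arm'' that produces a Bernoulli$(p_{i,j})$ sample when dueled. For every suboptimal arm $i \in \{2,\dots,n\}$, I construct an alternative matrix $P^{(i)}$ in which arm $i$ becomes the unique Borda winner and which differs from $P$ only in entries involving arm $i$. A $\delta$-PAC algorithm must then be able to distinguish $P$ from $P^{(i)}$, and the change-of-measure inequality converts this into a per-arm lower bound on the expected number of duels involving arm $i$. Summing over $i$ and dividing by the multiplicity with which each duel is charged (at most $2$) yields the claimed bound on $\E_P[\tau]$.

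\textbf{Construction and change of measure.} Given $i\neq 1$, set $\Delta_i = s_1 - s_i > 0$ and define $P^{(i)}$ by $p^{(i)}_{i,j}=p_{i,j}+\Delta_i$ and $p^{(i)}_{j,i}=1-p^{(i)}_{i,j}$ for every $j\neq i$, leaving every other entry unchanged. A direct calculation gives $s^{(i)}_i = s_i+\Delta_i$ and $s^{(i)}_1 = s_1 - \Delta_i/(n-1)$, so
\begin{equation*}
s^{(i)}_i - s^{(i)}_1 \;=\; (s_i-s_1) + \tfrac{n}{n-1}\Delta_i \;=\; \tfrac{\Delta_i}{n-1} \;>\; 0,
\end{equation*}
and a similar check shows $s^{(i)}_j < s^{(i)}_i$ for every $j \neq i$, so arm $i$ is the \emph{unique} Borda winner of $P^{(i)}$. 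The hypothesis $p_{i,j}\in[3/8,5/8]$ forces $s_1-s_i\leq 1/4$, hence $\Delta_i \leq 1/4$ and all perturbed entries remain in $[1/8,7/8]$, bounded away from $\{0,1\}$. Let $A$ denote the event ``the algorithm outputs arm $1$'' and let $N_{\{i,j\}}(\tau)$ count duels between $i$ and $j$. The $\delta$-PAC property gives $\P_P(A)\geq 1-\delta$ and $\P_{P^{(i)}}(A)\leq\delta$, and the pair-arm version of Lemma~1 of \citet{kaufmann2014complexity} produces
\begin{equation*}
\sum_{j\neq i}\E_P\!\left[N_{\{i,j\}}(\tau)\right]\cdot\mathrm{KL}\!\left(p_{i,j},\,p_{i,j}+\Delta_i\right) \;\geq\; d(1-\delta,\delta) \;\geq\; \log\tfrac{1}{2\delta},
\end{equation*}
where the final inequality is verified directly from $d(1-\delta,\delta)=(1-2\delta)\log\tfrac{1-\delta}{\delta}$ under the standing assumption $\delta\leq 0.15$.

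\textbf{Bounding KL, summing, and main obstacle.} Lemma~\ref{klbernoulliupperbound} in the Appendix bounds $\mathrm{KL}(p,p+\Delta) \leq C_0\Delta^2$ whenever both parameters lie in $[1/8,7/8]$; plugging in $\Delta_i^2 = (s_1-s_i)^2$ gives
\begin{equation*}
\sum_{j\neq i}\E_P\!\left[N_{\{i,j\}}(\tau)\right] \;\geq\; \frac{1}{C_0(s_1-s_i)^2}\,\log\tfrac{1}{2\delta}.
\end{equation*}
Summing over $i \neq 1$ and noting that each duel contributes to at most two of the inner sums (once for each endpoint, with the $i=1$ sum excluded), the left side is at most $2\,\E_P[\tau]$, yielding the stated bound with $C = 1/(2C_0)$. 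The main technical obstacle is constant-chasing to land at $C=1/90$: $\Delta_i$ must be chosen just large enough to flip the Borda winner so that the KL contribution stays as close to $(s_1-s_i)^2$ as possible, while the constant from Lemma~\ref{klbernoulliupperbound} must be combined tightly with the estimate $d(1-\delta,\delta)\geq\log\tfrac{1}{2\delta}$ (valid precisely because $\delta\leq 0.15$) and with the factor of $2$ from double-counting. A secondary, routine subtlety is justifying the change-of-measure identity at the random stopping time $\tau$, which is handled by the standard Wald-type martingale argument (or by first truncating $\tau$ and letting the truncation level tend to infinity).
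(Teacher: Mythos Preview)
Your proposal is correct and follows essentially the same route as the paper: the change-of-measure inequality of \citet{kaufmann2014complexity} applied pairwise, an alternative model in which row $i$ is shifted to make arm $i$ the Borda winner, Lemma~\ref{klbernoulliupperbound} to convert KL into a squared-gap bound, and the factor-of-two double-counting to pass from per-arm duel counts to $\E_P[\tau]$. The one minor difference is that you perturb \emph{all} entries $p_{i,j}$ (including $j=1$) by $\Delta_i=s_1-s_i$, whereas the paper leaves $p_{b,1}$ fixed and compensates with the larger shift $\tfrac{n-1}{n-2}(s_1-s_b)$; your choice avoids the $\bigl(\tfrac{n-2}{n-1}\bigr)^2$ factor and in fact delivers a constant $C=7/128$ that is already better than the claimed $1/90$, so the ``constant-chasing obstacle'' you flag is not an obstacle at all.
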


The proof can be found in the supplementary material.

In particular, this implies that for the preference matrix $P_1$ in
\eqref{badP1matrix}, any algorithm that makes no assumption about the structure
of the $P$ matrix requires $\Omega \left( n^2 \right)$ samples.  Next we argue that the particular structure found in $P_1$ is an extreme case of a more general structural phenomenon found in real datasets and that it is a natural structure to assume and design algorithms to exploit.

\subsection{Motivation from Real-World Data}
The matrices $P_1$ and $P_2$ above illustrate a key structural aspect that can
make it easier to find the Borda winner.  If the arms with the top Borda scores
are distinguished by duels with a small subset of the arms (as exemplified in
$P_1$), then finding the Borda winner may be easier than in the general case.
Before formalizing a model for this sort of structure, let us look at two
real-world datasets, which motivate the model.  

We consider the Microsoft Learning to Rank web search datasets MSLR-WEB10k
\citep{letor} and MQ2008-list \citep{letor2} (see the experimental section for a
descrptions).  Each dataset is used to construct a corresponding probability
matrix $P$.  We use these datasets to test the hypothesis that comparisons with
a small subset of the arms may suffice to determine which of two arms has a
greater Borda score.  

Specifically, we will consider the Borda score of the best arm (arm $1$) and
every other arm.
 For any other arm $i>1$ and any positive integer $k
\in [n-2]$,
let $\Omega_{i,k}$ be a set of cardinality $k$ containing the indices $j \in [n]\setminus \{1,i\}$ with the $k$ largest
discrepancies $|p_{1,j}-p_{i,j}|$. These are the duels that, individually,
display the greatest differences between arm $1$ and $i$.  For each $k$, define
$\alpha_i(k)=2(p_{1,i}-\tfrac{1}{2}) + \sum_{j\in \Omega_{i,k}} (p_{1,j}-p_{i,j})$.  If the hypothesis
holds, then the duels with a small number of (appropriately chosen) arms should
indicate that arm $1$ is better than arm $i$.  In other words, $\alpha_i(k)$
should become and stay positive as soon as $k$ reaches a relatively small
value.
 Plots of these $\alpha_i$ curves for two datasets are presented in
Figures~\ref{modelFit}, and indicate that the Borda winner is apparent for small
$k$.  This behavior is explained by the fact that the individual discrepancies
$|p_{1,j}-p_{i,j}|$, decay quickly when ordered from largest to smallest, as
shown in Figure~\ref{des}.  

The take away message is that it is unnecessary to estimate the difference  or
gap between the Borda scores of two arms. It suffices to compute the {\em
partial} Borda gap based on duels with a small subset of the arms.  An
appropriately chosen subset of the duels will correctly indicate which arm has a
larger Borda score. The algorithm proposed in the next section automatically
exploits this structure.

\begin{figure}[h]
\centering
\includegraphics[width=8cm]{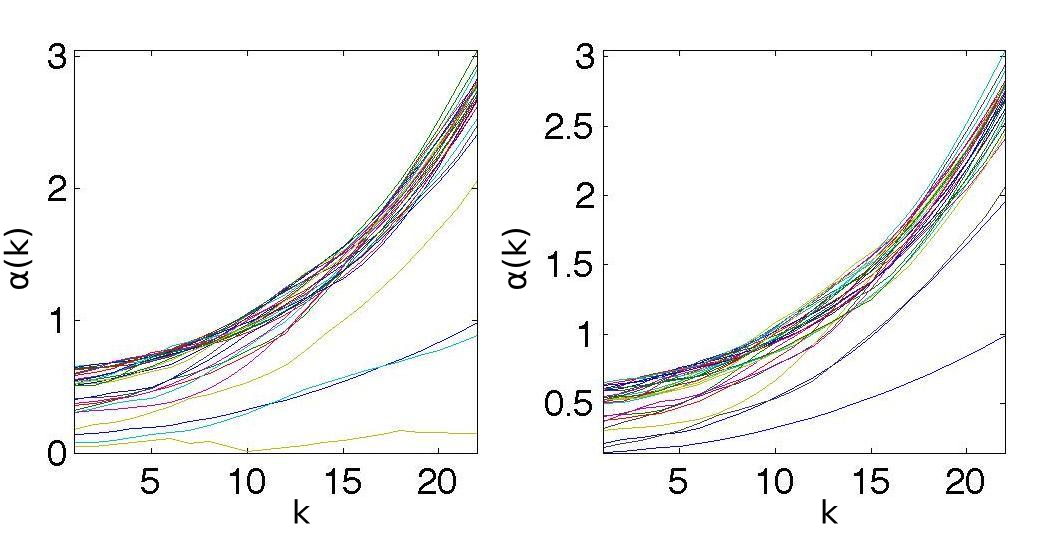} 
\caption{Plots of $\alpha_i(k)=2(p_{1,i}-\tfrac{1}{2}) + \sum_{j\in \Omega_{i,k}} (p_{1,j}-p_{1,j})$ vs.
$k$ for $30$ randomly chosen arms (for visualization purposes); MSLR-WEB10k on
left, MQ2008-list on right. The curves are strictly positive after a small
number of duels.}
\label{modelFit}
\end{figure}

\begin{figure}[h]
\centering
\includegraphics[width=8cm]{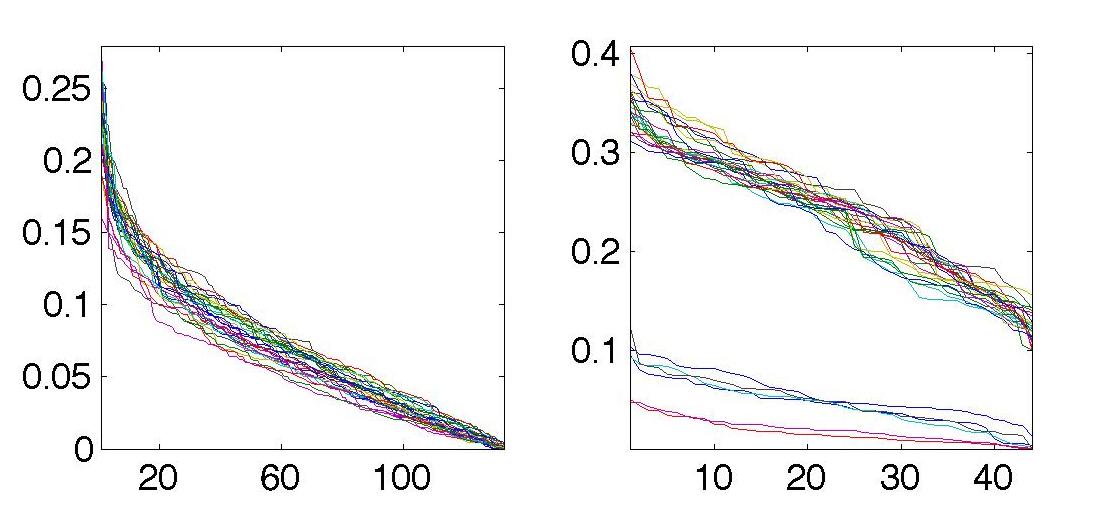} 
\caption{Plots of discrepancies $|p_{1,j}-p_{i,j}|$ in descending order for $30$
randomly chosen arms (for visualization purposes); MSLR-WEB10k on left,
MQ2008-list on right.}
\label{des}
\end{figure}

\section{ALGORITHM AND ANALYSIS}

\begin{algorithm*}
Input sparsity level $k \in [n-2]$, time gate $T_0 \geq 0$ \\
Start with active set $A_1 = \{1,2,\cdots,n \}$, $t=1$\\
Let $C_t = \sqrt{ \frac{2\log(4 n^2 t^2 / \delta)}{t/n} } + \frac{ 2 \log(4n^2t^2/\delta)}{3t/n}$\\
\While{$|A_t|>1$}{
Choose $I_t$ uniformly at random $[n]$.\\
\For{$j \in A_t$}{
Observe $Z_{j,I_t}^{(t)}$ and update $\widehat{p}_{j,I_t,t} = \frac{n}{t}
\sum_{\ell=1}^t  Z_{j,I_\ell}^{(\ell)} \1_{I_\ell = I_t}$, $\widehat{s}_{j,t} =
\frac{   n/(n-1)  }{t} \sum_{\ell=1}^t Z_{j,I_\ell}^{(\ell)}$.\\
}
$\displaystyle A_{t+1} = A_t \setminus \bigg\{ j \in A_t : \exists i \in A_t \text{ with }\ \ $\\ \quad\, \hspace{.4in} $ \, 1) \ \ \ \1_{\{t > T_0\}} \, \widehat{\Delta}_{i,j,t}\left(  \arg\max_{\Omega \subset [n] : |\Omega| = k} \widehat{\nabla}_{i,j,t}(\Omega)  \right) ~> ~6 (k+1) C_t$ \\ \indent\hspace{.3in}$\text{  \textbf{OR}  } \ \ \ 2) \ \  \ \widehat{s}_{i,t} > \widehat{s}_{j,t} + \frac{n}{n-1}\sqrt{ \frac{2 \log(4 n t^2/\delta)}{t}  } \bigg\}$\\
$t \leftarrow t+1$\\
}
\caption{Sparse Borda Algorithm}
\end{algorithm*}

In this section we propose a new algorithm that exploits the kind of structure just described above and prove a sample complexity bound. The algorithm is inspired by the Successive Elimination (SE) algorithm of \citet{even2006action} for standard multi-armed bandit problems. Essentially, the proposed algorithm below implements SE with the Borda reduction and an additional elimination criterion that exploits sparsity (condition 1 in the algorithm). We call the algorithm Successive Elimination with Comparison Sparsity (SECS). 

We will use $\1_E$ to denote the indicator of the event $E$ and $[n] = \{1,2,\dots,n\}$. The algorithm maintains an active set of arms $A_t$ such that if $j \notin A_t$ then the algorithm has concluded that arm $j$ is not the Borda winner. At each time $t$, the algorithm chooses an arm $I_t$ uniformly at random from $[n]$ and compares it with all the arms in $A_t$. Note that $A_{k} \subseteq A_{\ell}$ for all $k \geq \ell$. Let $Z_{i,j}^{(t)} \in \{0,1\}$ be independent Bernoulli random variables with $\E[Z_{i,j}^{(t)}] = p_{i,j}$, each denoting the outcome of ``dueling'' $i,j \in [n]$ at time $t$ (define $Z_{i,j}^{(t)}=0$ for $i=j$). For any $t \geq 1$, $i \in [n]$, and $j \in A_t$ define
\begin{align*}
\widehat{p}_{j,i,t} = \frac{n}{t}
\sum_{\ell=1}^t  Z_{j,I_\ell}^{(\ell)} \1_{I_\ell = i}
\end{align*}
so that $\E\left[ \widehat{p}_{j,i,t} \right] = p_{j,i}$. Furthermore, for any $t \geq1$, $j \in A_t$ define 
\begin{align*}
\widehat{s}_{j,t} =
\frac{ n/(n-1) }{t} \sum_{\ell=1}^t Z_{j,I_\ell}^{(\ell)}
\end{align*}
so that $\E\left[ \widehat{s}_{j,t} \right] = s_{j}$. 
For any $\Omega \subset [n]$ and $i,j \in [n]$ define  
\begin{align*}
\Delta_{i,j}(\Omega) &= 2(p_{i,j}-\tfrac{1}{2}) + \sum_{\omega \in \Omega: \omega \neq i \neq j} ( p_{i,\omega} - p_{j,\omega} )\\
\widehat{\Delta}_{i,j,t}(\Omega) &=  2( \widehat{p}_{i,j,t} -\tfrac{1}{2})  + \sum_{\omega \in \Omega: \omega \neq i \neq j} ( \widehat{p}_{i,\omega,t} - \widehat{p}_{j,\omega,t} ) \\
{\nabla}_{i,j}(\Omega) &=   \sum_{\omega \in \Omega: \omega \neq i \neq j} | {p}_{i,\omega} - {p}_{j,\omega} | \\
\widehat{\nabla}_{i,j}(\Omega) &=  \sum_{\omega \in \Omega: \omega \neq i \neq j} | \widehat{p}_{i,\omega,t} - \widehat{p}_{j,\omega,t} | \ .
\end{align*}
The quantity $\Delta_{i,j}(\Omega)$ is the {\em partial} gap between the Borda scores for $i$ and $j$, based on only the comparisons with the arms in $\Omega$.  Note that $\frac{1}{n-1}\Delta_{i,j}([n]) = s_i - s_j$. 
The quantity $ \arg\max_{\Omega \subset [n] : |\Omega| = k} {\nabla}_{i,j}(\Omega) $ selects the indices $\omega$ yielding the largest discrepancies $|p_{i,\omega} -p_{j,\omega}|$. $\widehat{\Delta}$ and $\widehat{\nabla}$ are empirical analogs of these quantities.
  
\begin{definition}
For any $i \in [n] \setminus 1$ we say the set $\{ (p_{1,\omega}-p_{i,\omega}) \}_{\omega \neq 1 \neq i}$ is $(\gamma,k)$-approximately sparse if 
\begin{align*}
\displaystyle \max_{\Omega \in [n] : |\Omega| \leq k}\nabla_{1,i}(\Omega \setminus \Omega_i)  \ \leq \ \gamma \Delta_{1,i}(\Omega_i)
\end{align*} 
where $\displaystyle\Omega_i =  \arg\max_{\Omega \subset[n] : |\Omega|=k} \nabla_{1,i}( \Omega )$.
\end{definition}
Instead of the strong assumption that the set $\{ (p_{1,\omega}-p_{i,\omega}) \}_{\omega \neq 1 \neq i}$ has no more than $k$ non-zero coefficients, the above definition relaxes this idea and just assumes that the absolute value of the coefficients outside the largest $k$ are small relative to the partial Borda gap. This definition is inspired by the structure described in previous sections and will allow us to find the Borda winner faster. 

The parameter $T_0$ is specified (see Theorem~\ref{sparseTheorem}) to guarantee that all arms with sufficiently large gaps $s_1-s_i$ are eliminated by time step $T_0$ (condition 2). Once $t > T_0$, condition 1 also becomes active and the algorithm starts removing arms with large partial Borda gaps, exploiting the assumption that the top arms can be distinguished by comparisons with a sparse set of other arms. The algorithm terminates when only one arm remains.

\begin{theorem} \label{sparseTheorem}
Let $k \geq 0$ and $T_0>0$ be inputs to the above algorithm and let $R$ be the solution to $\frac{32 }{R^2} \log\left( \frac{32 n/ \delta}{R^2 }  \right) = T_0$. If for all $i \in [n] \setminus 1$, at least one of the following holds:
\begin{enumerate}
\item$\{ (p_{1,\omega}-p_{i,\omega}) \}_{\omega \neq 1 \neq i}$ is $(\tfrac{1}{3},k)$-approximately sparse, 
\item $(s_1-s_i) \geq R$, 
\end{enumerate} 
then with probability at least $1- 3\delta$, the algorithm returns the best arm after no more than
\begin{align*}
c \sum_{j > 1} \min\Big\{  \max\left\{  \tfrac{1 }{R^2} \log\left( \tfrac{ n/
\delta}{R^2 }  \right), \tfrac{ (k+1)^2 / n }{ \Delta_{j}^2} \log\left(  \tfrac{
n / \delta }{ \Delta_{j}^{2} } \right)  \right\} ,\\
\tfrac{ 1 }{ \Delta_{j}^{2}}
\log\left(  \tfrac{ n / \delta }{ \Delta_{j}^{2} } \right)  \Big\}
\end{align*}
samples where $\Delta_{j}:= s_{1} - s_j$ and $c>0$ is an absolute constant.
\end{theorem}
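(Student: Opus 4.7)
The plan is a standard ``concentrate-and-deduce'' argument for successive-elimination analyses, adapted to handle the data-dependent sparse elimination rule that distinguishes this algorithm from a plain Borda reduction.

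First I would define a good event $\mathcal{G}$ on which both (i) $|\widehat{p}_{j,i,t}-p_{j,i}|\leq C_t$ for all $i,j\in[n]$ and all $t\geq 1$, and (ii) $|\widehat{s}_{j,t}-s_j|$ is at most half the Hoeffding slack appearing in condition~2 of the algorithm, for all $j\in[n]$ and all $t\geq 1$. For (i) I would apply Bernstein to the iid summands $Z_{j,I_\ell}^{(\ell)}\mathbf{1}_{I_\ell=i}$, which are bounded by $1$ and have variance at most $1/n$; the two-term expression for $C_t$ in the pseudocode is exactly the Bernstein deviation at level $\delta/(2n^2 t^2)$. For (ii) a Hoeffding bound on $\widehat{s}_{j,t}$ at level $\delta/(2nt^2)$ suffices. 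A union bound over $(i,j,t)$ and $(j,t)$ then yields $\Pr(\mathcal{G})\geq 1-3\delta$, with the extra $\delta$ absorbing the summation of $1/t^2$ and related constants.

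Next I would argue that on $\mathcal{G}$ arm $1$ is never eliminated. Ruling out condition~2 is routine: $\widehat{s}_{i,t}>\widehat{s}_{1,t}+\alpha_t$ together with (ii) would force $s_i>s_1$, contradicting the definition of arm~$1$. Condition~1 is the substantive part. To show $\widehat{\Delta}_{i,1,t}(\widehat{\Omega}_{i,1,t})\leq 6(k+1)C_t$ for every $i\in A_t$, I would first use (i) to obtain the deterministic-offset bound $|\widehat{\Delta}_{i,1,t}(\Omega)-\Delta_{i,1}(\Omega)|\leq 2(k+1)C_t$ for every $|\Omega|=k$, reducing the task to lower-bounding $\Delta_{1,i}(\widehat{\Omega}_{i,1,t})$. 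Here I would exploit that $\widehat{\Omega}_{i,1,t}$ is the $\widehat{\nabla}$-maximizer: by (i),
\begin{equation*}
\nabla_{1,i}(\widehat{\Omega}_{i,1,t})\geq \widehat{\nabla}_{i,1,t}(\widehat{\Omega}_{i,1,t})-2kC_t\geq \widehat{\nabla}_{i,1,t}(\Omega_i^{\ast})-2kC_t\geq \nabla_{1,i}(\Omega_i^{\ast})-4kC_t,
\end{equation*}
so $\nabla_{1,i}([n]\setminus\widehat{\Omega}_{i,1,t})\leq \nabla_{1,i}([n]\setminus\Omega_i^{\ast})+4kC_t$, which by $(\gamma,k)$-approximate sparsity is $\leq \gamma\Delta_{1,i}(\Omega_i^{\ast})+4kC_t$. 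Combining with $\Delta_{1,i}(\widehat{\Omega}_{i,1,t})\geq \Delta_{1,i}([n])-\nabla_{1,i}([n]\setminus\widehat{\Omega}_{i,1,t})$ and $(1-\gamma)\Delta_{1,i}(\Omega_i^{\ast})\leq \Delta_{1,i}([n])$ yields $\Delta_{1,i}(\widehat{\Omega}_{i,1,t})\geq \tfrac{1-2\gamma}{1-\gamma}\Delta_{1,i}([n])-4kC_t$, which for $\gamma=1/3$ is at least $-4kC_t$, hence $\widehat{\Delta}_{i,1,t}(\widehat{\Omega}_{i,1,t})\leq 6(k+1)C_t$ as needed.

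For the sample-complexity half, fix $j\neq 1$. If $s_1-s_j\geq R$, inverting the slack in condition~2 via (ii) shows it fires once $t\gtrsim (s_1-s_j)^{-2}\log((n/\delta)/(s_1-s_j)^2)$. If instead the sparsity condition holds, a mirror of the argument above applied to the pair $(1,j)$ produces $\widehat{\Delta}_{1,j,t}(\widehat{\Omega}_{1,j,t})\geq \tfrac12(n-1)(s_1-s_j)-6(k+1)C_t$, so condition~1 fires once $(n-1)(s_1-s_j)\gtrsim (k+1)C_t$, which inverts (using $C_t^2\asymp n\log(nt/\delta)/t$) to $t\gtrsim \tfrac{(k+1)^2/n}{(s_1-s_j)^2}\log((n/\delta)/(s_1-s_j)^2)$. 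The gate $t>T_0$ forces condition~1 to fire no earlier than $T_0$, yielding the outer $\max$ in the theorem; the choice $T_0=(32/R^2)\log(32n/(R^2\delta))$ is precisely the firing time of condition~2 at gap $R$, so the two terms in the $\max$ line up. Finally, round $t$ issues $|A_t|$ duels, so the total count is $\sum_j \tau_j$ with $\tau_j$ the elimination time of arm $j$, and taking the per-arm minimum of the two bounds produces the stated sum.

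The main obstacle is the lower bound on $\Delta_{1,i}(\widehat{\Omega}_{i,1,t})$: the maximizing set is data-dependent, while $(\gamma,k)$-approximate sparsity only constrains discrepancies relative to the \emph{true} maximizer $\Omega_i^{\ast}$, so the argument must transfer control from $\widehat{\Omega}$ to $\Omega_i^{\ast}$ via the noisy $\widehat{\nabla}$ while keeping all multiplicative and additive slacks tight. Choosing $\gamma=1/3$ is what makes the $\tfrac{1-2\gamma}{1-\gamma}=\tfrac12$ factor nonnegative, and the threshold $6(k+1)C_t$ is the smallest that lets both correctness for arm~$1$ and timely elimination for suboptimal arms go through simultaneously. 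Everything else reduces to bookkeeping with Bernstein and Hoeffding tail sums.
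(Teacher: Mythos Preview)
Your overall architecture matches the paper's: define a high-probability good event via Bernstein on $\widehat p_{i,j,t}$ and Hoeffding on $\widehat s_{j,t}$, show arm $1$ survives both elimination rules, then bound the elimination time of each suboptimal arm. The gap is in how you invoke the $(\gamma,k)$-approximate sparsity assumption.

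You write that $(\gamma,k)$-approximate sparsity gives
\[
\nabla_{1,i}([n]\setminus\Omega_i^{\ast}) \leq \gamma\,\Delta_{1,i}(\Omega_i^{\ast}),
\]
and you use the companion inequality $(1-\gamma)\Delta_{1,i}(\Omega_i^{\ast})\leq \Delta_{1,i}([n])$, which relies on the same bound. But the definition only asserts
\[
\max_{|\Omega|\leq k}\nabla_{1,i}(\Omega\setminus\Omega_i^{\ast}) \leq \gamma\,\Delta_{1,i}(\Omega_i^{\ast}),
\]
i.e.\ it controls the $k$ \emph{largest} discrepancies outside $\Omega_i^{\ast}$, not the sum over all $n-k-2$ of them. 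The set $[n]\setminus\Omega_i^{\ast}$ has cardinality far exceeding $k$, so $\nabla_{1,i}([n]\setminus\Omega_i^{\ast})$ is not bounded by the hypothesis, and your chain from $\Delta_{1,i}(\widehat\Omega)$ down to a constant multiple of $\Delta_{1,i}([n])$ breaks. The same issue recurs in your elimination-time argument for suboptimal $j$, where you claim $\widehat\Delta_{1,j,t}(\widehat\Omega_{1,j,t})\geq \tfrac12(n-1)(s_1-s_j)-6(k+1)C_t$.

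The paper avoids passing through $[n]\setminus\Omega_i^{\ast}$ entirely. It decomposes $\Delta_{i,1}(\widehat\Omega_i)-\Delta_{i,1}(\Omega_i)$ into sums over $\widehat\Omega_i\setminus\Omega_i$ and $\Omega_i\setminus\widehat\Omega_i$; since both $\widehat\Omega_i$ and $\Omega_i$ have cardinality exactly $k$, each set difference has cardinality at most $k$, and the sparsity hypothesis applies directly to $\nabla_{1,i}(\widehat\Omega_i\setminus\Omega_i)$. The piece over $\Omega_i\setminus\widehat\Omega_i$ is handled by the maximality of $\widehat\Omega_i$ for $\widehat\nabla$: one has $\widehat\nabla_{i,1,t}(\Omega_i\setminus\widehat\Omega_i)\leq\widehat\nabla_{i,1,t}(\widehat\Omega_i\setminus\Omega_i)$, and two more applications of the concentration bound and one more application of sparsity finish it. The resulting lower bound is in terms of $\Delta_{1,i}(\Omega_i)$ rather than $\Delta_{1,i}([n])$, and it is this quantity (not the full Borda gap) that drives the sparse elimination time. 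So the fix is to work only with size-$\leq k$ set differences and to state the sparse-branch sample complexity in terms of $\Delta_{1,j}(\Omega_j)$.
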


The second argument of the $\min$ is precisely the result one would obtain by running Successive Elimination with the Borda reduction \citep{even2006action}. Thus, under the stated assumptions, the algorithm never does worse than the Borda reduction scheme.  The first argument of the $\min$ indicates the potential improvement gained by exploiting the sparsity assumption.  The first argument of the $\max$ is the result of throwing out the arms with large Borda differences and the second argument is the result of throwing out arms where a {\em partial} Borda difference was observed to be large. 

To illustrate the potential improvements, consider the $P_1$ matrix discussed above, the theorem implies that by setting $T_0 = \frac{32 }{R^2} \log\left( \frac{32 n/ \delta}{R^2 }  \right) $ with $R = \frac{1/2 + \epsilon}{n-1} + \frac{1}{4} \frac{n-2}{n-1} \approx \frac{1}{4}$ and $k=1$ we obtain a sample complexity of $O( \epsilon^{-2} n \log(n))$ for the proposed algorithm compared to the standard Borda reduction sample complexity of $\Omega(n^2)$. 

In practice it is difficult optimize the choice of  $T_0$ and $k$, but motivated by the results shown in the experiments section, we recommend setting $T_0=0$ and $k=5$ for typical problems.

\section{EXPERIMENTS}
The goal of this section is not to obtain the best possible sample complexity results for the specified datasets, but to show the {\em relative} performance gain of exploiting structure using the proposed SECS algorithm with respect to the Borda reduction. That is, we just want to measure the effect of exploiting sparsity while keeping all other parts of the algorithms constant. Thus, the algorithm we compare to that uses the simple Borda reduction is simply the SECS algorithm described above but with $T_0 = \infty$ so that the sparse condition never becomes activated. Running the algorithm in this way, it is very closely related to the Successive Elimination algorithm of \citet{even2006action}. In what follows, our proposed algorithm will be called SECS and the benchmark algorithm will be denoted as just the Borda reduction (BR) algorithm. 

We experiment on both simulated data and two real-world datasets. During all experiments, both the BR and SECS algorithms were run with $\delta=0.1$. For the SECS algorithm we set $T_0=0$ to enable condition 1 from the very beginning (recall for BR we set $T_0 = \infty$). Also, while the algorithm has a constant factor of 6 multiplying $(k+1)C_t$, we feel that the analysis that led to this constant is very loose so in practice we recommend the use of a constant of $1/2$ which was used in our experiments. While the change of this constant invalidates the guarantee of Theorem~\ref{sparseTheorem}, we note that in all of the experiments to be presented here, neither algorithm ever failed to return the best arm. This observation also suggests that the SECS algorithm is robust to possible inconsistencies of the model assumptions.
 
\subsection{Synthetic Preference matrix}
\label{synth}
Both algorithms were tasked with finding the best arm using the $P_1$ matrix of \eqref{badP1matrix} with $\epsilon = 1/5$ for problem sizes equal to  $n=10,20,30,40,50,60,70,80$ arms. Inspecting the $P_1$ matrix, we see that a value of $k=1$ in the SECS algorithm suffices so this is used for all problem sizes. The entries of the preference matrix $P_{i,j}$ are used to simulate comparisons between the respective arms and each experiment was repeated 75 times.

\begin{figure}
\centering
\includegraphics[trim=0cm 0cm 0cm 0cm, clip=true, scale = 0.4]{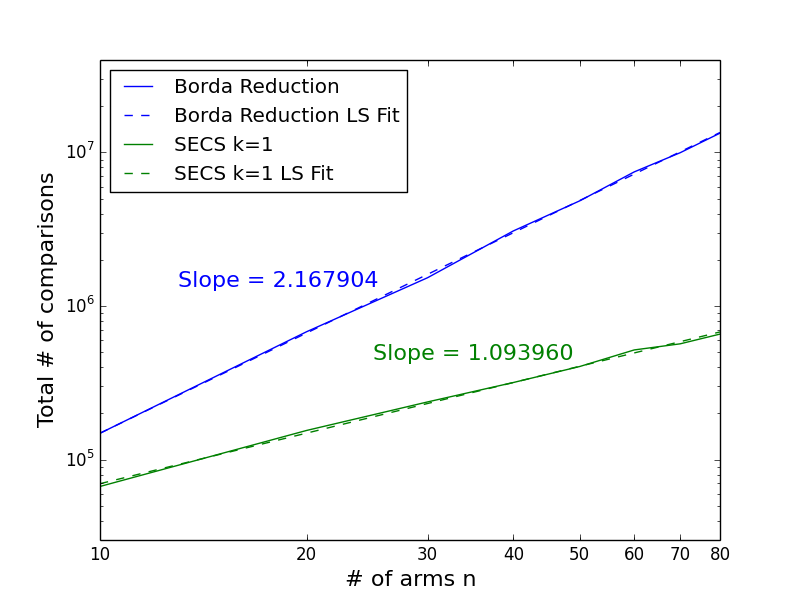}
\caption{Comparison of the Borda reduction algorithm and the proposed SECS algorithm ran on the $P_1$ matrix for different values of $n$. Plot is on log-log scale so that the sample complexity grows like $n^s$ where $s$ is the slope of the line.}
\label{SyntheticResults}
\end{figure}

Recall from Section~\ref{motivationSection} that any algorithm using the Borda reduction on the $P_1$ matrix has a sample complexity of $\Omega(n^2)$. Moreover, inspecting the proof of Theorem~\ref{sparseTheorem} one concludes that the BR algorithm has a sample complexity of $O(n^2 \log(n))$ for the $P_1$ matrix. On the other hand, Theorem~\ref{sparseTheorem} states that the SECS algorithm should have a sample complexity no worse than $O( n \log(n) )$ for the $P_1$ matrix. Figure \ref{SyntheticResults} plots the sample complexities of SECS and BR on a log-log plot. On this scale, to match our sample complexity hypotheses, the slope of the BR line should be about $2$ while the slope of the SECS line should be about $1$, which is exactly what we observe. 

\subsection{Web search data}
We consider two web search data sets. The first is the MSLR-WEB10k Microsoft Learning to Rank data set \citep{letor} that is characterized by approximately 30,000 search queries over a number of documents from search results. The data also contains the values of 136 features and corresponding user labelled relevance factors with respect to each query-document pair. We use the training set of Fold 1, which comprises of about 2,000 queries. The second data set is the MQ2008-list from the Microsoft Learning to Rank 4.0 (MQ2008) data set \citep{letor2}. We use the training set of Fold 1, which has about 550 queries. Each query has a list of documents with 46 features and corresponding user labelled relevance factors.

For each data set, we create a set of rankers, each corresponding to a feature from the feature list. The aim of this task is be to determine the feature whose ranking of query-document pairs is the most relevant. To compare two rankers, we randomly choose a pair of documents and compare their relevance rankings with those of the features. Whenever a mismatch occurs between the rankings returned by the two features, the feature whose ranking matches that of the relevance factors of the two documents ``wins the duel''. If both features rank the documents similarly, the duel is deemed to have resulted in a tie and we flip a fair coin. We run a Monte Carlo simulation on both data sets to obtain a preference matrix $P$ corresponding to their respective feature sets. As with the previous setup, the entries of the preference matrices ($[P]_{i,j}=p_{i,j}$) are used to simulate comparisons between the respective arms and each experiment was repeated 75 times.

From the MSLR-WEB10k data set, a single arm was removed for our experiments as its Borda score was unreasonably close to the arm with the best Borda score and behaved unlike any other arm in the dataset with respect to its $\alpha_i$ curves, confounding our model. 
For these real datasets, we consider a range of different $k$ values for the SECS algorithm. As noted above, while there is no guarantee that the SECS algorithm will return the true Borda winner, in all of our trials for all values of $k$ reported we never observed a single error. This is remarkable as it shows that the correctness of the algorithm is insensitive to the value of $k$ on at least these two real datasets. The sample complexities of BR and SECS on both datasets are reported in Figure~\ref{realData}. We observe that the SECS algorithm, for small values of $k$, can identify the Borda winner using as few as {\em half} the number required using the Borda reduction method. As $k$ grows, the performance of the SECS algorithm becomes that of the BR algorithm, as predicted by Theorem~\ref{sparseTheorem}. 

\begin{figure}
\centering
\begin{subfloat}[MSLR-WEB10k]{
\centering
\includegraphics[trim=0cm 0cm 0cm 0cm, clip=true, scale = 0.4]{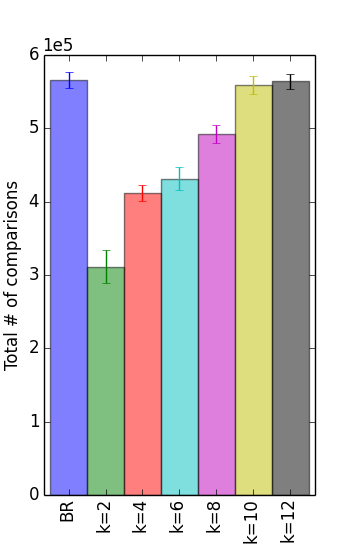}
}
\begin{subfloat}[MQ2008]{
\centering
\includegraphics[trim=.25cm 0cm 0cm 0cm, clip=true, scale = 0.4]{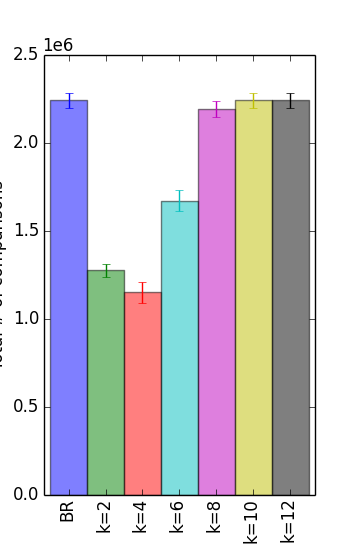}
}
\end{subfloat}%
\end{subfloat}
\caption{Comparison of an action elimination-style algorithm using the Borda reduction (denoted as BR) and the proposed SECS algorithm with different values of $k$ on the two datasets.}
\label{realData}
\end{figure}

Lastly, the preference matrices of the two data
sets support the argument for finding the Borda winner over the Condorcet
winner. The MSLR-WEB10k data set has no Condorcet winner arm. However, while the MQ2008 data set has a Condorcet winner,  when we consider the Borda scores of the arms, it ranks second.

\bibliography{sparse_dueling_bandits}

\appendix

\section{Proof of Lower Bound} \label{lowerBound}
We begin by stating a few technical lemmas. At the heart of the proof of the lower bound  is Lemma 1 of  \citet{kaufmann2014complexity} restated here for completeness.
\begin{lemma}
\label{kaufmannlemma}
Let $\nu$ and $\nu'$ be two bandit models defined over $n$ arms. Let $\sigma$ be a stopping time with respect to
$(\mathcal{F}_t)$ and let $A \in \mathcal{F}_{\sigma}$ be an event such that
$0 < \P_{\nu}(A) < 1$. Then 
$$\sum\limits_{a=1}^n \E_\nu[N_a(\sigma)]KL(\nu_a,\nu_a') \geq
d(\P_{\nu}(A),\P_{\nu'}(A))$$
where $d(x,y) = x \log (x/y) + (1-x)\log ((1-x)/(1-y))$.
\end{lemma}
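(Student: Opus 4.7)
The plan is to prove this as a standard change-of-measure identity combined with the data-processing inequality for KL divergence. The basic object I will work with is the log-likelihood ratio $L_\sigma = \log(d\P_\nu/d\P_{\nu'})$ of the observation sequence up to the stopping time $\sigma$, viewed as a random variable on $\mathcal{F}_\sigma$. The two ingredients needed are (i) a Wald-type identity that rewrites $\E_\nu[L_\sigma]$ as a weighted sum of per-arm KL divergences, and (ii) the contraction of KL under the coarsening $\mathcal{F}_\sigma \to \sigma(\mathbf{1}_A)$.

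First I would set up the canonical filtration $(\mathcal{F}_t)$ generated by the arm-choice/reward pairs $(I_s,X_s)_{s\le t}$, let $f_a$ and $f_a'$ denote the densities of $\nu_a$ and $\nu_a'$ with respect to a common dominating measure, and write the one-step likelihood ratio as $\log(f_{I_t}(X_t)/f_{I_t}'(X_t))$. Summing over $t \le \sigma$ gives
\begin{equation*}
L_\sigma \;=\; \sum_{t=1}^{\sigma} \log\!\frac{f_{I_t}(X_t)}{f_{I_t}'(X_t)} \;=\; \sum_{a=1}^{n}\sum_{t=1}^{\sigma}\mathbf{1}_{\{I_t=a\}}\log\!\frac{f_a(X_t)}{f_a'(X_t)}.
\end{equation*}
Since $\{I_t=a\}\in\mathcal{F}_{t-1}$ and conditional on this event $X_t\sim\nu_a$ under $\P_\nu$, the inner summand has conditional expectation $KL(\nu_a,\nu_a')$. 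A standard optional-stopping / Wald argument (truncating $\sigma$ at $T$ and passing to the limit, using $\E_\nu[N_a(\sigma)]<\infty$) then yields
\begin{equation*}
\E_\nu[L_\sigma] \;=\; \sum_{a=1}^{n} \E_\nu[N_a(\sigma)]\,KL(\nu_a,\nu_a').
\end{equation*}

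Next I would identify $\E_\nu[L_\sigma]$ with $KL(\P_\nu|_{\mathcal{F}_\sigma},\P_{\nu'}|_{\mathcal{F}_\sigma})$, which is just the defining equality of KL when the two measures are mutually absolutely continuous on $\mathcal{F}_\sigma$. Then I apply the data-processing inequality: for the binary random variable $\mathbf{1}_A$ with $A\in\mathcal{F}_\sigma$,
\begin{equation*}
KL(\P_\nu|_{\mathcal{F}_\sigma},\P_{\nu'}|_{\mathcal{F}_\sigma}) \;\ge\; KL\!\bigl(\mathrm{Ber}(\P_\nu(A)),\mathrm{Ber}(\P_{\nu'}(A))\bigr) \;=\; d(\P_\nu(A),\P_{\nu'}(A)).
\end{equation*}
Chaining the two displays gives the inequality claimed in the lemma.

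The main obstacle is the technical one of justifying the Wald identity at the stopping time $\sigma$, since $\sigma$ need not be bounded; the standard fix is to work with $\sigma\wedge T$, use the martingale property of $L_t - \sum_{a}N_a(t)KL(\nu_a,\nu_a')$-type centered sums together with dominated/monotone convergence as $T\to\infty$, invoking the assumption that $\E_\nu[N_a(\sigma)]<\infty$ (which is implicit whenever the right-hand side of the lemma is finite; otherwise the inequality is trivial). Once this is in place, the rest is essentially the definition of KL and its data-processing property, with no further delicate estimates needed.
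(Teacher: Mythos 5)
Your proof is correct, but note that the paper itself does not prove this lemma: it is imported verbatim from \citet{kaufmann2014complexity} ("restated here for completeness"), so there is no in-paper argument to compare against. What you have written is essentially the standard proof from that reference --- the decomposition of the log-likelihood ratio $L_\sigma$ over arms, the Wald-type optional-stopping identity $\E_\nu[L_\sigma]=\sum_a \E_\nu[N_a(\sigma)]\,KL(\nu_a,\nu_a')$, and the data-processing (conditional Jensen) step reducing to the Bernoulli divergence $d(\P_\nu(A),\P_{\nu'}(A))$ --- and all three steps are sound; the only nit is that when you dismiss the infinite case you mean the \emph{left}-hand side of the inequality (the weighted KL sum), not the right-hand side, and you should assume $\nu_a'\ll\nu_a$ so each $KL(\nu_a,\nu_a')$ is well defined, which holds trivially in the Bernoulli setting the paper actually uses.
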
 

Note that the function $d$ is exactly the KL-divergence between two Bernoulli distributions.
\begin{corollary}
\label{kaufmanncorollary}
Let $N_{i,j}=N_{j,i}$ denote the number of duels between arms $i$ and $j$. For the duelling bandits problem with $n$ arms, we have $\frac{(n-1)(n-2)}{2}$
free parameters (or arms). These are the numbers in the upper triangle of the
$P$ matrix. Then, if $P'$ is an alternate matrix, we have from Lemma
\ref{kaufmannlemma}, $$\sum\limits_{i=1}^n \sum\limits_{j=i+1}^n \E_P[N_{i,j}]
d(p_{i,j},p_{i,j}') \geq d(\P_P(A),\P_{P'}(A))$$ 
\end{corollary}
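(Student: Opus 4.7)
The plan is to obtain the Corollary as an immediate specialization of Lemma~\ref{kaufmannlemma} by recasting the dueling bandits environment as a standard multi-armed bandit with $\binom{n}{2}$ meta-arms, one for each unordered pair $\{i,j\}$ with $i<j$. Concretely, I would identify each meta-arm $a=(i,j)$ with the Bernoulli distribution $\nu_a=\mathrm{Bern}(p_{i,j})$ under $P$ and with $\nu_a'=\mathrm{Bern}(p_{i,j}')$ under $P'$; identify the pull counter $N_a(\sigma)$ appearing in Lemma~\ref{kaufmannlemma} with the duel counter $N_{i,j}$; and use the natural filtration generated by the sequence of (pair chosen, duel outcome) observations.

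Under this identification, the KL-divergence $KL(\nu_a,\nu_a')$ between the two Bernoulli distributions is exactly the function $d(p_{i,j},p_{i,j}')$ defined in the statement of Lemma~\ref{kaufmannlemma}, as already noted in the remark immediately following that lemma. Substituting these quantities into the conclusion of Lemma~\ref{kaufmannlemma} and reindexing the single sum over meta-arms as a double sum over $1\leq i<j\leq n$ yields
\begin{equation*}
\sum_{i=1}^{n}\sum_{j=i+1}^{n} \E_P[N_{i,j}]\, d(p_{i,j},p_{i,j}') \ \geq \ d(\P_P(A),\P_{P'}(A)),
\end{equation*}
which is exactly the claim of the Corollary.

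The only item requiring any genuine verification is that the dueling bandits setting actually satisfies the hypotheses of Lemma~\ref{kaufmannlemma} under this relabeling: one must confirm that $\sigma$ remains a stopping time with respect to the pair-outcome filtration, that $A\in\mathcal{F}_\sigma$ in that same filtration, and that observations are independent across duels conditional on the selected pair. All three points are routine, since choosing a pair and observing its Bernoulli outcome is exactly the data-generating mechanism Lemma~\ref{kaufmannlemma} was designed to handle; no additional probabilistic content is required. I therefore do not anticipate any substantive obstacle — the bulk of the work is simply establishing the dictionary between ``arm'' in the classical bandit sense and ``duel pair'' here, after which the inequality is a one-line substitution.
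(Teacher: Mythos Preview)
Your proposal is correct and matches the paper's own treatment: the corollary is stated without a separate proof and is meant as an immediate instantiation of Lemma~\ref{kaufmannlemma} under exactly the pair-as-arm identification you describe. There is nothing more to add; your dictionary and one-line substitution are precisely what is intended.
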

The above corollary relates the cumulative number of duels of a subset of arms to the uncertainty between the actual distribution and an alternative distribution. In deference to interpretability rather than preciseness, we will use the following bound of the KL divergence. 

\begin{lemma} (Upper bound on KL Divergence for Bernoullis)
\label{klbernoulliupperbound}
Consider two Bernoulli random variables with means $p$ and $q$, $0 < p,q < 1$.
Then 
$$d(p,q) \leq \frac{(p-q)^2}{q(1-q)}$$
\end{lemma}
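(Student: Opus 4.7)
The plan is to prove the bound by a direct two-line application of the elementary inequality $\log x \leq x - 1$, valid for all $x > 0$. The quantity $(p-q)^2/(q(1-q))$ is exactly the chi-squared divergence between $\mathrm{Bernoulli}(p)$ and $\mathrm{Bernoulli}(q)$, so the lemma is really a special case of the general fact $KL(P\|Q) \leq \chi^2(P,Q)$; however, for Bernoullis it is cleaner to derive the bound directly than to invoke the general statement.

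The key steps are as follows. The hypothesis $0 < p, q < 1$ ensures that $p/q$ and $(1-p)/(1-q)$ are positive, so $\log x \leq x - 1$ applies to each. Applying it to the two logarithms in $d(p,q) = p\log(p/q) + (1-p)\log((1-p)/(1-q))$ yields
\begin{align*}
p\log(p/q) &\leq p\bigl(p/q - 1\bigr) = \tfrac{p(p-q)}{q}, \\
(1-p)\log\bigl((1-p)/(1-q)\bigr) &\leq (1-p)\bigl((1-p)/(1-q) - 1\bigr) = \tfrac{(1-p)(q-p)}{1-q}.
\end{align*}
Summing and pulling out $(p-q)$ gives $d(p,q) \leq (p-q)\bigl[p/q - (1-p)/(1-q)\bigr]$. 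Placing the bracket over the common denominator $q(1-q)$, the cross terms $pq$ in $p(1-q) - q(1-p)$ cancel, leaving numerator $p-q$ and hence the claimed bound $(p-q)^2/(q(1-q))$.

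There is no substantive obstacle: the argument is essentially one application of $\log x \leq x-1$ per term followed by algebra. The only ``aha'' step is recognizing that the two independently-produced linear bounds combine, via the cancellation $p(1-q) - q(1-p) = p-q$, into a single squared term; this is also what makes the inequality tight to second order around $p = q$, matching the local quadratic behavior of $d(p,q)$.
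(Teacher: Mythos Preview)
Your proof is correct and follows exactly the same approach as the paper's: apply $\log x \leq x-1$ to each of the two logarithmic terms in $d(p,q)$ and simplify the resulting expression to $(p-q)^2/\bigl(q(1-q)\bigr)$. The paper's proof is simply a more compressed version of the same computation.
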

\begin{proof}
\begin{align*}
    d(p,q) = p \log \frac{p}{q} + (1-p)\log \frac{1-p}{1-q} \leq p
    \frac{p-q}{q} + (1-p)\frac{q-p}{1-q} = \frac{(p-q)^2}{q(1-q)}
\end{align*}
where we use the fact that $\log x \leq x-1$ for $x > 0$.
\end{proof}

We are now in a position to restate and prove the lower bound theorem.

\begin{theorem}
(Lower bound on sample complexity of finding Borda winner for the Dueling
Bandits Problem) Consider a matrix $P$ such that $\frac{3}{8} \leq p_{i,j} \leq
\frac{5}{8}, \forall i,j \in [n]$, and $n \geq 3$. Then for $\delta \leq 0.15$, any $\delta$-PAC
dueling bandits algorithm to find the Borda winner has $$\E_P[\tau] \geq C \left(
\frac{n-2}{n-1} \right)^2 \left( \sum\limits_{i \neq 1}^{} \frac{1}{(s_1-s_i)^2}
\right) \log \frac{1}{2\delta}$$ where $s_i = \frac{1}{n-1}\sum\limits_{j \neq
i}^{}p_{i,j}$ denotes the Borda score of arm $i$. $C$ can be chosen to be
$\tfrac{1}{40}$.
\end{theorem}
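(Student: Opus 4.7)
The plan is to apply the change-of-measure lower bound framework (Corollary~\ref{kaufmanncorollary}) once per suboptimal arm $i \in \{2,\ldots,n\}$. For such an $i$, I construct an alternative preference matrix $P^{(i)}$ that agrees with $P$ except on the $n-2$ entries $\{p_{i,j}\}_{j \neq 1, i}$, each of which is shifted upward by a common amount $\delta_i := \tfrac{n-1}{n-2}(s_1 - s_i) + \eta$ (with $\eta \downarrow 0$ at the end). Under $P^{(i)}$ the Borda score of arm $i$ increases by $\tfrac{n-2}{n-1}\delta_i$ and thus exceeds $s_1$ (which is unchanged), while the scores $s_j$ for $j \neq 1,i$ only decrease because the symmetric entries $p_{j,i}$ drop by $\delta_i$. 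Hence arm $i$ is the unique Borda winner under $P^{(i)}$. The box constraint $p_{i,j}\in[3/8,5/8]$ together with $s_1 - s_i \leq 1/4$ and $n \geq 4$ ensures that every perturbed entry stays inside a fixed subinterval of $(0,1)$ bounded away from the boundary.

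Next I apply Corollary~\ref{kaufmanncorollary} with the event $A = \{\text{algorithm outputs arm }1\}$. Since the algorithm is $\delta$-PAC, $\P_P(A) \geq 1-\delta$ and $\P_{P^{(i)}}(A) \leq \delta$, which yields $d(\P_P(A),\P_{P^{(i)}}(A)) \geq d(1-\delta,\delta) \geq \log\tfrac{1}{2\delta}$ for $\delta \leq 0.15$ (the second inequality is a routine one-variable check using $d(1-\delta,\delta) = (1-2\delta)\log\tfrac{1-\delta}{\delta}$). Only the $n-2$ perturbed pairs contribute to the left-hand side of the corollary, so
\[
\sum_{j \neq 1, i} \E_P[N_{i,j}]\, d\bigl(p_{i,j},\, p_{i,j}+\delta_i\bigr) \;\geq\; \log\tfrac{1}{2\delta}.
\]
Invoking Lemma~\ref{klbernoulliupperbound} and the uniform boundedness of the perturbed entries, each KL term is at most $C_0\,\delta_i^{2}$ for an absolute constant $C_0$. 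Rearranging and sending $\eta\to 0$ gives
\[
\sum_{j \neq 1, i} \E_P[N_{i,j}] \;\geq\; \frac{1}{C_0}\left(\frac{n-2}{n-1}\right)^{2}\frac{\log(1/(2\delta))}{(s_1-s_i)^{2}}.
\]

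Finally I sum this inequality over $i \in \{2,\ldots,n\}$. Every pair $\{a,b\}$ with $a,b\neq 1$ is counted exactly twice in the double sum on the left (once with $i=a$, once with $i=b$), while pairs involving arm $1$ are never counted, so the double sum is at most $2\,\E_P[\tau]$. This produces the theorem with $C = 1/(2C_0)$; a careful choice of constants in Lemma~\ref{klbernoulliupperbound} must yield $C \geq 1/40$. The main obstacles are purely bookkeeping: ensuring $P^{(i)}$ stays in the admissible class while still tilting arm $i$ past arm $1$, and tightening $C_0$ to match the advertised constant $1/40$. The dueling-specific twist responsible for the $\left(\tfrac{n-2}{n-1}\right)^{2}$ factor is that perturbing one row of the matrix shifts $n-1$ Borda scores at once, so a per-entry perturbation larger than $s_1 - s_i$ by a factor $\tfrac{n-1}{n-2}$ is required, and the KL cost pays the square of this inflation.
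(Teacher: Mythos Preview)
Your proposal is correct and essentially identical to the paper's own proof: the same alternative hypothesis $P^{(i)}$ perturbing only the entries $\{p_{i,j}\}_{j\neq 1,i}$ by $\tfrac{n-1}{n-2}(s_1-s_i)+\eta$, the same event $A=\{\text{output arm }1\}$ plugged into Corollary~\ref{kaufmanncorollary}, the same KL bound via Lemma~\ref{klbernoulliupperbound}, and the same double-counting step to pass from the per-arm inequalities to $\E_P[\tau]$. The only cosmetic difference is that the paper packages the per-arm bound as $\E_P[N_b]\geq \tfrac{1}{20}\bigl(\tfrac{n-2}{n-1}\bigr)^2(s_1-s_b)^{-2}\log\tfrac{1}{2\delta}$ (with the explicit constant coming from $p'_{b,j}\leq 15/16$, hence $1/(p'(1-p'))\leq 20$) and then uses $\E_P[\tau]=\tfrac{1}{2}\sum_b N_b$, whereas you exclude $N_{i,1}$ from the start and do the pair-counting directly; the arithmetic is the same and yields $C=1/40$.
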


\begin{proof}
Consider an alternate hypothesis $P'$ where arm $b$ is the best arm, and such
that $P'$ differs from $P$ only in the indices $\{bj: j \notin \{1,b\}\}$.  Note
that the Borda score of arm 1 is unaffected in the alternate hypothesis.
Corollary \ref{kaufmanncorollary} then gives us: 
\begin{align}
\sum\limits_{j \in [n] \backslash \{1,b\}} \E_P[N_{b,j}]d(p_{b,j},p_{b,j}') \geq
d(\P(A),\P(A')) \label{lowerboundeqn1}
\end{align}
Let $A$ be the event that the algorithm selects arm $1$ as the best arm. Since
we assume a $\delta$-PAC algorithm, $\P_P(A) \geq 1-\delta$, $\P_{P'}(A) \leq
\delta$. It can be shown that for $\delta \leq 0.15$, $d(\P_P(A),\P_{P'}(A)) \geq
\log \frac{1}{2\delta}$.

Define $N_b = \sum\limits_{j \neq b}^{} N_{b,j}$. Consider
\begin{align}
    \left( \max\limits_{j \notin \{1,b\}}^{}
    \frac{(p_{b,j}-p'_{b,j})^2}{p'_{b,j}(1-p'_{b,j})} \right) \E_P[N_b] &\geq \left(
    \max\limits_{j \notin \{1,b\}}^{} d(p_{b,j},p'_{b,j}) \right) \E_P[N_b] \nonumber \\
    &= \left( \max\limits_{j \notin \{1,b\}}^{} d(p_{b,j},p'_{b,j}) \right)
    \left( \sum\limits_{j \neq b}^{} \E_P[N_{b,j}] \right) \nonumber \\
    &\geq \left( \max\limits_{j \notin \{1,b\}}^{} d(p_{b,j},p'_{b,j}) \right)
    \left( \sum\limits_{j \notin \{1,b\}}^{} \E_P[N_{b,j}] \right) \nonumber \\
    &\geq \sum\limits_{j \in [n] \backslash \{1,b\}}
    \E_P[N_{b,j}]d(p_{b,j},p_{b,j}') \nonumber \\
    &\geq \log \frac{1}{2\delta} .\quad \text{(by \eqref{lowerboundeqn1})} \label{lowerboundeqn2} 
\end{align}

In particular, choose $p'_{b,j} = p_{b,j}+\frac{n-1}{n-2}(s_1-s_b)+\varepsilon$,
$j \notin \{1,b\}$. As required, under hypothesis $P'$, arm $b$ is the best
arm.\\

Since $p_{b,j} \leq \frac{5}{8}$, $s_1 \leq \frac{5}{8}$, and $s_b \geq
\frac{3}{8}$, as $\varepsilon \searrow 0$,
$\lim\limits_{\varepsilon \searrow 0} p'_{b,j} \leq
\frac{15}{16}$. This implies $\frac{1}{p'_{b,j}(1-p'_{b,j})} \leq
\frac{256}{15} \leq 20$. \eqref{lowerboundeqn2} implies
\begin{align}
    &\phantom{\Rightarrow} 20 \left( \frac{n-1}{n-2}(s_1-s_b) + \varepsilon
    \right)^2 \E_P[N_b] \geq \log \frac{1}{2\delta} \nonumber \\
    &\Rightarrow \E_P[N_b] \geq \frac{1}{20} \left( \frac{n-2}{n-1} \right)^2
    \frac{1}{(s_1-s_b)^2} \log \frac{1}{2\delta} \label{lowerboundeqn3}
\end{align}
where we let $\varepsilon \searrow 0$.\\

Finally, iterating over all arms $b \neq 1$, we have
$$\E_P[\tau]  = \frac{1}{2} \sum\limits_{b=1}^n \sum\limits_{j\neq b}
\E_P[N_{b,j}] = \frac{1}{2} \sum\limits_{b=1}^n \E_P[N_b] \geq
\frac{1}{2}\sum\limits_{b=2}^{n} \E_P[N_b] \geq \frac{1}{40}
\left( \frac{n-2}{n-1} \right)^2 \left( \sum\limits_{b \neq 1}^{}
\frac{1}{(s_1-s_b)^2} \right) \log \frac{1}{2\delta}$$
\end{proof}

\section{Proof of Upper Bound} \label{upperBound}

To prove the theorem we first need a technical lemma.
\begin{lemma} \label{concentrationLemma}
For all $s \in \mathbb{N}$, let $I_s$ be drawn independently and uniformly at random from $[n]$ and let $Z_{i,j}^{(s)}$ be a Bernoulli random variable with mean $p_{i,j}$. If $\widehat{p}_{i,j,t} = \frac{n}{t} \sum_{s=1}^t Z_{i,j}^{(s)} \1_{I_s = j}$ for all $i \in [n]$ and $C_t =  \sqrt{ \frac{2\log(4 n^2 t^2 / \delta)}{t/n} } + \frac{ 2 \log(4n^2t^2/\delta)}{3t/n}$ then $\P\left( \bigcup_{(i,j) \in [n]^2 : i \neq j} \bigcup_{t=1}^\infty \left\{ \left| \widehat{p}_{i,j,t} - p_{i,j} \right| >  C_t   \right\}   \right)  \leq \delta$. 
\end{lemma}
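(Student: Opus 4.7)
The plan is to recognize that for fixed $(i,j)$ with $i\neq j$, the summands $X_s := Z_{i,j}^{(s)} \mathbf{1}_{I_s = j}$ (for $s = 1,\ldots,t$) are i.i.d.\ random variables in $\{0,1\}$ with mean $\E[X_s] = p_{i,j}\,\P(I_s = j) = p_{i,j}/n$, so that $\widehat{p}_{i,j,t} = (n/t)\sum_{s=1}^t X_s$ is an unbiased estimator of $p_{i,j}$. Crucially, because $X_s$ is an indicator, $X_s^2 = X_s$ and hence $\mathrm{Var}(X_s) \leq \E[X_s] = p_{i,j}/n \leq 1/n$. This small variance (of order $1/n$ rather than $1$) is what ultimately lets us control the fluctuations of the rescaled sum $\widehat{p}_{i,j,t}$ at the rate $C_t$ claimed.

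Next I would apply Bernstein's inequality to $\sum_{s=1}^t X_s$. With variance proxy $t/n$ and almost-sure range $1$, Bernstein gives, for any $u > 0$,
\begin{equation*}
\P\!\left( |\widehat{p}_{i,j,t} - p_{i,j}| > u \right) \;=\; \P\!\left( \Big|\sum_{s=1}^t X_s - t p_{i,j}/n\Big| > \tfrac{t u}{n} \right) \;\leq\; 2\exp\!\left( -\frac{(t/n)\, u^2}{2 + 2u/3} \right).
\end{equation*}
Setting the right-hand side equal to $\delta/(2 n^2 t^2)$ and inverting via the standard ``square-root plus linear'' upper bound (valid since $au^2 \geq L$ and $bu \geq L$ together imply one can solve $\tfrac{au^2}{1+bu/a}\geq L$ by $u \leq \sqrt{L/a} + L/(ab)$ type manipulations) yields exactly
\begin{equation*}
u \;=\; \sqrt{\frac{2 \log(4 n^2 t^2/\delta)}{t/n}} + \frac{2 \log(4 n^2 t^2/\delta)}{3\, t/n} \;=\; C_t.
\end{equation*}

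Finally, I apply a union bound over the at most $n^2$ ordered pairs $(i,j)$ with $i \neq j$ and over all $t \geq 1$, using $\sum_{t=1}^\infty t^{-2} = \pi^2/6 < 2$:
\begin{equation*}
\P\!\left( \bigcup_{(i,j):\, i\neq j} \bigcup_{t=1}^\infty \{|\widehat{p}_{i,j,t}-p_{i,j}|>C_t\} \right) \;\leq\; n^2 \cdot \sum_{t\geq 1} \frac{\delta}{2 n^2 t^2} \;\leq\; \frac{\pi^2}{12}\,\delta \;<\; \delta.
\end{equation*}

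The only delicate part is the Bernstein inversion: one must carefully track the factor $t/n$ (rather than $t$) arising from the reduced variance, and confirm that the two-term form of $C_t$ is a valid upper solution of the implicit inequality — but this is a routine quadratic-style manipulation. Everything else is packaging: the independence of the $X_s$ across $s$ is immediate from the problem setup, and the $1/(2n^2 t^2)$ allocation in the union bound is chosen precisely so that the factor of $2$ from the two-sided Bernstein absorbs cleanly.
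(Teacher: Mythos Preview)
Your proposal is correct and follows essentially the same approach as the paper: apply Bernstein's inequality to the i.i.d.\ summands (the paper scales them as $nZ_{i,j}^{(s)}\1_{I_s=j}\in[0,n]$ with second moment at most $n$, which is equivalent to your $X_s\in[0,1]$ with variance at most $1/n$), then union bound over the $n^2$ pairs and all $t\ge 1$. You have simply filled in the details of the Bernstein inversion and the $\sum_{t\ge 1} t^{-2}$ step that the paper leaves implicit.
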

\begin{proof}
Note that $t \widehat{p}_{i,j,t} = \sum_{s=1}^t  n Z_{i,j}^{(s)} \1_{I_s = j}$ is a sum of i.i.d. random variables taking values in $[0,n]$ with $\E\left[ \left(n Z_{i,j}^{(s)} \1_{I_s = j} \right)^2 \right] \leq n^2 \E\left[  \1_{I_s = j}  \right] \leq n$. A direct application of Bernstein's inequality \citep{boucheron2013concentration} and union bounding over all pairs $(i,j) \in [n]^2$ and time $t$ gives the result. 
\end{proof}
A consequence of the lemma is that by repeated application of the triangle inequality,
\begin{align*}
\left| \widehat{\nabla}_{i,j,t}(\Omega) - \nabla_{i,j}(\Omega) \right| &=  \bigg| \sum_{\omega \in \Omega: \omega \neq i \neq j} | \widehat{p}_{i,\omega,t} - \widehat{p}_{j,\omega,t} | - |p_{i,\omega} - p_{j,\omega}| \bigg| \\
&\leq  \sum_{\omega \in \Omega: \omega \neq i \neq j} | \widehat{p}_{i,\omega,t} - p_{i,\omega}| + | p_{j,\omega} - \widehat{p}_{j,\omega,t} |  \\   
&\leq 2 |\Omega|   C_t
\end{align*}
and similarly $\left| \widehat{\Delta}_{i,j,t}(\Omega) - \Delta_{i,j}(\Omega) \right| \leq 2(1+|\Omega|)C_t$  for all $i,j \in [n]$ with $i \neq j$, all $t \in \mathbb{N}$ and all $\Omega \subset [n]$. We are now ready to prove Theorem~\ref{sparseTheorem}.
\begin{proof}
We begin the proof by defining $C_t(\Omega) = 2(1 +  |\Omega| )  C_t$ and considering the events 
\begin{align*}
&\bigcap_{t=1}^\infty \bigcap_{\Omega \subset [n] }  \left\{  | \widehat{\Delta}_{i,j,t}(\Omega) - \Delta_{i,j}(\Omega) |  <   C_{t}(\Omega) \right\} \, ,\\
&\bigcap_{t=1}^\infty \bigcap_{\Omega \subset [n] }  \left\{  | \widehat{\nabla}_{i,j,t}(\Omega) - \nabla_{i,j}(\Omega) |  <   C_{t}(\Omega) \right\} \, ,\\
&\bigcap_{t=1}^\infty \bigcap_{i=1}^n \left\{  | \widehat{s}_{i,t} - s_i | <  \frac{n}{n-1 } \sqrt{ \frac{ \log(4 n t^2/\delta)}{2t}  } \right\} \, ,
\end{align*}
that each hold with probability at least $1-\delta$. The first set of events are a consequence of Lemma~\ref{concentrationLemma} and the last set of events are proved using a straightforward Hoeffding bound \citep{boucheron2013concentration} and a union bound similar to that in Lemma~\ref{concentrationLemma}. In what follows assume these events hold. 

\textbf{Step 1: If $t > T_0$ and $s_1-s_j > R$, then $j \notin A_t$.}\\
We begin by considering all those $j \in [n] \setminus 1$ such that  $s_1-s_j \geq R$ and show that with the prescribed value of $T_0$, these arms are thrown out before $t > T_0$. By the events defined above, for arbitrary $i \in [n] \setminus 1$ we have
\begin{align*}
\widehat{s}_{i,t} - \widehat{s}_{1,t} = \widehat{s}_{i,t} - s_i + s_1 - \widehat{s}_{1,t}  + s_i-s_1 \leq s_i - s_{1} + \frac{2n}{n-1 } \sqrt{ \frac{ \log(4 n t^2/\delta)}{2t}  } \leq \frac{2n}{n-1} \sqrt{ \frac{ \log(4 n t^2/\delta)}{2t}  } 
\end{align*}
since by definition $s_1 > s_{i}$. This proves that the best arm will never be thrown out using the Borda reduction which implies that $1 \in A_t$ for all $t \leq T_0$. On the other hand, for any $j \in [n] \setminus 1$ such that  $s_1-s_j \geq R$ and $t \leq T_0$ we have
\begin{eqnarray*}
\max_{i \in A_t} \widehat{s}_{i,t} - \widehat{s}_{j,t} & \geq & \widehat{s}_{1,t} - \widehat{s}_{j,t} \\
& \geq & s_1 - s_{j} -  \frac{2n}{n-1} \sqrt{ \frac{ \log(4 n t^2/\delta)}{2t}  } \\
& = &  \frac{ \Delta_{1,j}([n]) }{n-1} - \frac{2n}{n-1} \sqrt{ \frac{ \log(4 n t^2/\delta)}{2t}  } \  .
\end{eqnarray*}
If $\tau_j$ is the first time $t$ that the right hand side of the above is greater than or equal to $\frac{2n}{n-1 } \sqrt{ \frac{  \log(4 n t^2/\delta)}{2t}  }$ then
\begin{align*}
\tau_j &\leq \frac{32 n^2 }{\Delta_{1,j}^2([n]) } \log\left( \frac{32 n^3/ \delta}{\Delta_{1,j}^2([n]) }  \right) , 
\end{align*}
since  for all positive $a,b,t$ with $a/b \geq e$ we have $t \geq  \frac{2\log(a/b)}{b} \implies b \geq \frac{ \log(a t)}{t}  $. Thus, any $j$ with $\frac{ \Delta_{1,j}([n]) }{n-1} = s_1-s_j \geq R$ has $\tau_j \leq T_0$ which implies that any $i \in A_t$ for $t > T_0$ has $s_1-s_i \leq R$.\\

\textbf{Step 2: For all $t$, $1 \in A_t$.}\\
We showed above that the Borda reduction will never remove the best arm from $A_t$. We now show that the sparse-structured discard condition will not remove the best arm. At any time $t > T_0$, let $i \in [n] \setminus 1$ be arbitrary and let $\widehat{\Omega}_i = \displaystyle\arg \max_{ \Omega \subset [n] : |\Omega| = k }  \widehat{\nabla}_{i,1,t}(\Omega)$ and ${\Omega_i} = \displaystyle\arg \max_{ \Omega \subset [n] : |\Omega| = k }  \nabla_{i,1}( \Omega )$. Note that for any $\Omega \subset [n]$ we have $\nabla_{i,1}(\Omega) = \nabla_{1,i}(\Omega)$ but $\Delta_{i,1}(\Omega) = -\Delta_{1,i}(\Omega)$ and
\begin{align*} 
\widehat{\Delta}_{i,1,t}( \widehat{\Omega}_i ) & \leq {\Delta}_{i,1}( \widehat{\Omega}_i ) + C_{t}( \widehat{\Omega}_i ) \\
&= {\Delta}_{i,1}( \widehat{\Omega}_i ) - {\Delta}_{i,1}( {\Omega_i} ) + {\Delta}_{i,1}( {\Omega_i} )  + C_{t}( \widehat{\Omega}_i ) \\
&= \left( \sum_{\omega \in \widehat{\Omega}_i} ( p_{i,\omega} - p_{1,\omega} ) \right) - \left( \sum_{\omega \in \Omega_i} ( p_{i,\omega} - p_{1,\omega} ) \right) -{\Delta}_{1,i}( {\Omega_i} )  + C_{t}( \widehat{\Omega}_i ) \\
&\leq -\left( \sum_{\omega \in \Omega_i \setminus \widehat{\Omega}_i} ( p_{i,\omega} - p_{1,\omega} ) \right) - \frac{2}{3} {\Delta}_{1,i}( {\Omega_i} )  + C_{t}( \widehat{\Omega}_i ) 
\end{align*}
since $ \left( \sum_{\omega \in \widehat{\Omega}_i \setminus \Omega_i} ( p_{i,\omega} - p_{1,\omega} ) \right) \leq \nabla_{1,i}\left( \widehat{\Omega}_i \setminus \Omega_i\right) \leq \frac{1}{3} {\Delta}_{1,i}( {\Omega_i} ) $ by the conditions of the theorem. Continuing,  
\begin{align*}
\widehat{\Delta}_{i,1,t}( \widehat{\Omega}_i ) &\leq -\left( \sum_{\omega \in \Omega_i \setminus \widehat{\Omega}_i} ( p_{i,\omega} - p_{1,\omega} ) \right) - \frac{2}{3} {\Delta}_{1,i}( {\Omega_i} )  + C_{t}( \widehat{\Omega}_i )  \\
& \leq  \left( \sum_{\omega \in \Omega_i \setminus \widehat{\Omega}_i} | \widehat{p}_{i,\omega,t} - \widehat{p}_{1,\omega,t} | \right) - \frac{2}{3} {\Delta}_{1,i}( {\Omega_i} )  + C_{t}( \widehat{\Omega}_i ) +  C_{t}( \Omega_i \setminus \widehat{\Omega}_i ) \\
&\leq \left( \sum_{\omega \in  \widehat{\Omega}_i \setminus \Omega_i } | \widehat{p}_{i,\omega,t} - \widehat{p}_{1,\omega,t} | \right) - \frac{2}{3} {\Delta}_{1,i}( {\Omega_i} )  + C_{t}( \widehat{\Omega}_i ) +  C_{t}( \Omega_i \setminus \widehat{\Omega}_i ) \\
&\leq \left( \sum_{\omega \in \widehat{\Omega}_i \setminus \Omega_i} | p_{i,\omega} - p_{1,\omega} | \right) - \frac{2}{3}{\Delta}_{1,i}( {\Omega_i} ) + C_{t}( \widehat{\Omega}_i )  +  C_{t}( \Omega_i \setminus \widehat{\Omega}_i )   +  C_{t}( \widehat{\Omega}_i \setminus \Omega_i  ) \\
&\leq - \frac{1}{3} {\Delta}_{1,i}( {\Omega_i} ) + C_{t}( \widehat{\Omega}_i )  +  C_{t}( \Omega_i \setminus \widehat{\Omega}_i )   +  C_{t}( \widehat{\Omega}_i \setminus \Omega_i  ) \\
&\leq  3 \max_{\Omega \subset [n]: |\Omega| \leq k} C_{t}(  \Omega  )  =  6(1 +  k )  C_t
\end{align*}
where the third inequality follows from the fact that $\widehat{\nabla}_{i,1,t}\left( \Omega_i \setminus  \widehat{\Omega}_i \right) \leq \widehat{\nabla}_{i,1,t}\left( \widehat{\Omega}_i \setminus \Omega_i \right)$ by definition, and the second-to-last line follows again by the same theorem condition used above. Thus, combining both steps one and two, we have that $1 \in A_t$ for all $t$. \\

\textbf{Step 3 : Sample Complexity}\\
At any time $t > T_0$, let $j \in [n] \setminus 1$ be arbitrary and let $\widehat{\Omega}_i = \displaystyle\arg \max_{ \Omega \subset [n] : |\Omega| = k }  \widehat{\nabla}_{1,j,t}(\Omega)$ and ${\Omega_i} = \displaystyle\arg \max_{ \Omega \subset [n] : |\Omega| = k }  \nabla_{1,j}( \Omega )$. We begin with
\begin{align*}
\hspace{1in}\hspace{-1in}\max_{i \in [n] \setminus j} \widehat{\Delta}_{i,j,t}\left(   \widehat{\Omega}_i \right) &\geq \widehat{\Delta}_{1,j,t}( \widehat{\Omega}_i )   \\
&\geq{\Delta}_{1,j}( \widehat{\Omega}_i ) - C_t( \widehat{\Omega}_i )  \\
&\geq{\Delta}_{1,j}( \widehat{\Omega}_i ) - \Delta_{1,j}(\Omega_i) + \Delta_{1,j}(\Omega_i) - C_t( \widehat{\Omega}_i )  \\
&=  \left( \sum_{\omega \in \widehat{\Omega}} ( p_{1,\omega} - p_{j,\omega} ) \right) - \left( \sum_{\omega \in \Omega_i} ( p_{1,\omega} - p_{j,\omega} ) \right)  + \Delta_{1,j}(\Omega_i) - C_t( \widehat{\Omega}_i ) \\
&\geq   - \left( \sum_{\omega \in \Omega_i \setminus \widehat{\Omega}} ( p_{i,\omega} - p_{1,\omega} ) \right)  + \frac{2}{3}\Delta_{1,j}(\Omega_i) - C_t( \widehat{\Omega}_i )  \\
&\geq  - \left( \sum_{\omega \in \Omega_i \setminus \widehat{\Omega}} | \widehat{p}_{i,\omega,t} - \widehat{p}_{1,\omega,t} | \right)  + \frac{2}{3}\Delta_{1,j}(\Omega_i)  - C_t( \widehat{\Omega}_i ) -  C_{t}( \Omega_i \setminus \widehat{\Omega}_i )  \\
&\geq  - \left( \sum_{\omega \in  \widehat{\Omega}_i \setminus \Omega_i } | \widehat{p}_{i,\omega,t} - \widehat{p}_{1,\omega,t} | \right)  + \frac{2}{3}\Delta_{1,j}(\Omega_i)   - C_t( \widehat{\Omega}_i ) -  C_{t}( \Omega_i \setminus \widehat{\Omega}_i ) \\
&\geq  - \left( \sum_{\omega \in  \widehat{\Omega}_i \setminus \Omega_i } | {p}_{i,\omega} - {p}_{1,\omega} | \right)  + \frac{2}{3}\Delta_{1,j}(\Omega_i)   - C_t( \widehat{\Omega}_i ) -  C_{t}( \Omega_i \setminus \widehat{\Omega}_i ) -  C_{t}(  \widehat{\Omega}_i \setminus \Omega_i ) \\
&\geq \frac{1}{3}{\Delta}_{1,j}( {\Omega_i} )  - 3 \max_{\Omega \subset [n]: |\Omega| \leq k} C_{t}(\Omega)  = \frac{1}{3} {\Delta}_{1,j}( {\Omega_i} )  - 6 (1 +  k )  C_t
\end{align*}
by a series of steps as analogous to those in Step 2. If $\tau_j$ is the first time $t > T_0$ such that the right hand side is greater than or equal to $6 (1 +  k )  C_t$, the point at which $j$ would be removed, we have that 
\begin{align*}
\tau_j \leq \frac{ 20736 n (k+1)^2}{ {\Delta}_{1,j}^2( {\Omega_i} ) } \log\left(  \frac{ 20736 n^2 (k+1)^2}{ {\Delta}_{1,j}^2( {\Omega_i} ) \, \delta}   \right)
\end{align*}
using the same inequality as above in Step 2. Combining steps one and three we have that the total number of samples taken is bounded by
\begin{align*}
\sum_{j > 1} \min\left\{  \max\left\{ T_0,  \frac{ 20736 n (k+1)^2}{ {\Delta}_{1,j}^2( {\Omega_i} ) } \log\left(  \frac{ 20736 n^2 (k+1)^2}{ {\Delta}_{1,j}^2( {\Omega_i} ) \, \delta}   \right)  \right\} , \frac{32 n^2 }{\Delta_{1,j}^2([n]) } \log\left( \frac{32 n^3/ \delta}{\Delta_{1,j}^2([n]) }  \right)  \right\}
\end{align*}
with probability at least $1- 3\delta$. The result follows from recalling that $\frac{ \Delta_{1,j}(\Omega_i) }{n-1} = s_1-s_j$ and noticing that $\frac{n}{n-1} \leq 2$ for $n \geq 2$.
\end{proof}
\end{document}